\newtheorem{lemma}{Lemma}
\newtheorem{proposition}{Proposition}
\newtheorem{definition}{Definition}
\definecolor{Gray}{gray}{0.9}
\def\BibTeX{{\rm B\kern-.05em{\sc i\kern-.025em b}\kern-.08em
    T\kern-.1667em\lower.7ex\hbox{E}\kern-.125emX}}
\titleformat{\subsubsection}[runin]{\normalfont\bfseries}{}{0em}{}[:]
\titlespacing{\subsubsection}{0pt}{\baselineskip}{\baselineskip}
\begin{document}

\title{\LARGE Attributing Responsibility in AI-Induced Incidents: A Computational Reflective Equilibrium Framework for Accountability}

\author{Yunfei Ge~\IEEEmembership{Student Member,~IEEE,} Ya-Ting Yang~\IEEEmembership{Student Member,~IEEE,} and Quanyan Zhu~\IEEEmembership{Senior Member,~IEEE.}
\thanks{The authors are with the Department of Electrical and Computer
Engineering, Tandon School of Engineering, New York University, Brooklyn,
NY 11201 USA (e-mail: yg2047@nyu.edu; yy4348@nyu.edu; qz494@nyu.edu).}
\thanks{This work has been presented in internal multidisciplinary venues spanning engineering, psychology, and philosophy, and has benefited from comments and feedback provided by participants as well as anonymous reviewers.}}




\maketitle

\begin{abstract}

The pervasive integration of Artificial Intelligence (AI) has introduced complex challenges in the responsibility and accountability in the event of incidents involving AI-enabled systems. The interconnectivity of these systems, ethical concerns of AI-induced incidents, coupled with uncertainties in AI technology and the absence of corresponding regulations, have made traditional responsibility attribution challenging. To this end, this work proposes a Computational Reflective Equilibrium (CRE) approach to establish a coherent and ethically acceptable responsibility attribution framework for all stakeholders. The computational approach provides a structured analysis that overcomes the limitations of conceptual approaches in dealing with dynamic and multifaceted scenarios, showcasing the framework's traceability, coherence, and adaptivity properties in the responsibility attribution process. We examine the pivotal role of the initial activation level associated with claims in equilibrium computation. Using an AI-assisted medical decision-support system as a case study, we illustrate how different initializations lead to diverse responsibility distributions.  The framework offers valuable insights into accountability in AI-induced incidents, facilitating the development of a sustainable and resilient system through continuous monitoring, revision, and reflection.

\end{abstract}

\begin{IEEEkeywords}
Responsibility Attribution, Accountability, Reflective Equilibrium, AI Ethics
\end{IEEEkeywords}

\section{Introduction}

In today's context, Artificial Intelligence (AI) has been integrated into systems with critical functionalities. Cyber-physical systems, such as smart health systems, cloud-enabled critical infrastructure, and connected vehicles, are quintessential examples. The wide adoption of AI, however, introduces a range of issues, encompassing safety concerns in autonomous driving, the reliability of AI-driven medical diagnosis and treatment, and the unforeseen outcomes arising from medical recommendation systems. The recent White House executive order \cite{biden2023executive} places the highest priority on governing the development and use of AI in a safe and responsible manner. Nonetheless, as an emerging technology with a black-box nature, AI introduces uncertainties that cannot be easily identified, assessed, and attributed by traditional approaches. Concerns arise due to the absence of mechanisms for responsibility attribution in incidents caused by AI systems. The lack of accountability can result in hesitancy in the pervasive adoption of AI, thereby limiting the growth and the impact of the technology. To this end, there is a need to establish an accountability framework for AI-enabled systems, facilitating the future development of safe, secure, and trustworthy AI.

Accountability involves recognizing and accepting responsibility for the potential outcomes of one's actions. In the context of AI-related incidents, it revolves around identifying responsibility for failures in AI systems \cite{10457538}. For example, AI-enabled medical decision-support systems rely on AI algorithms to provide medical diagnoses and recommendations.  Typically, a physician administers a medical test to a patient, and the results are input into the system. The doctor, relying on the AI's suggested results, prescribes medication to the patient. If medical incidents occur due to the patient following the doctor’s prescription, various parties could be accountable.  Possible points of accountability include the physician's actions during the operation, potential errors in AI development, the doctor's dependence on AI recommendations, and the patient's adherence to the treatment plan. Determining how to allocate responsibility among all involved parties proves to be a challenging task.

\begin{figure}[!t]
    \centering
  \includegraphics[width=1\linewidth]{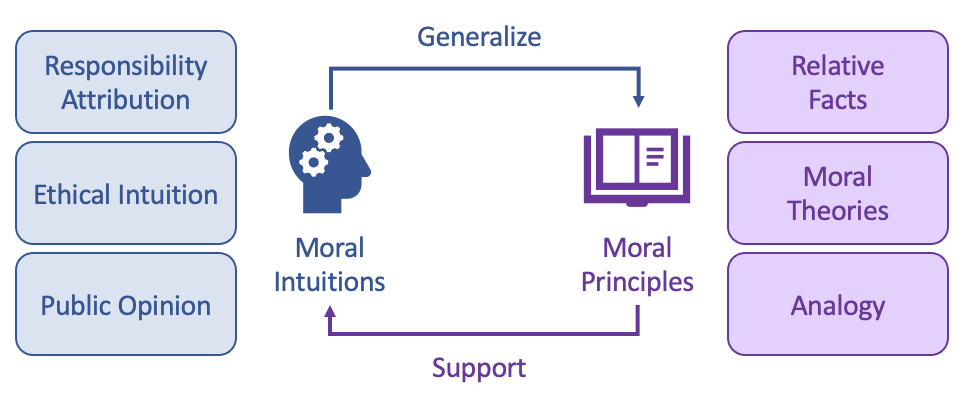}
  \caption{General process of reflective equilibrium: The process involves formulating a set of moral intuitions, applying evidence and moral principles to support the claim, and iteratively revising and reflecting on intuitions to establish a coherent equilibrium.}
  \label{fig:iterate}
  \vskip -2mm
\end{figure}

Attributing AI responsibility has the following challenges. The first challenge arises from the interconnected and complex nature of AI systems, where inherent vulnerabilities in each component can contribute to potential malfunctions, making responsibility attribution to individual components a complex task \cite{ge2022accountability}. The second challenge involves uncertainties in AI, both in the technology itself and in regulations governing AI. The AI system provides a diagnosis, but the reasoning behind the recommendation is not transparent or explainable, creating difficulties in determining accountability without clear examination tools. Additionally, there is a lack of legal regulations and guidelines outlining expected conduct, responsibilities, and ethical principles for practitioners when utilizing AI in their practice. Furthermore, new challenges also arise from ethical constraints. One example is the allocation of limited medical resources by an AI decision-support system, prioritizing certain groups of patients over others. Determining accountability becomes complex when such decisions result in inadequate medical resources for underprivileged patients.

To address these inherent challenges, we seek to establish a coherent responsibility attribution framework among stakeholders supported by justified beliefs. Drawing inspirations from moral and political philosophy, particularly associated with the work of John Rawls \cite{rawls2017theory, daniels2003reflective, daniels1979wide}, we employ Reflective Equilibrium (RE) as our approach. RE begins with moral intuitions or judgments regarding specific cases or general principles, representing immediate and unreflective responses to given situations. Subsequently, evidence and theoretical principles from philosophical frameworks or other theories are introduced. A comparison is then made between these intuitions and the theoretical principles, whether new or existing. In the presence of inconsistencies or conflicts, adjustments are made to establish a coherent and balanced set of principles and intuitions that mutually support each other. The iterative process, as illustrated in Fig.~\ref{fig:iterate}, eventually leads us to consistency in ethical responsibility attribution. In the context of AI-induced incidents, many inconsistencies arise due to the absence of applicable theory conflicting with human intuition.

Applying RE to AI-induced incidents, we first establish a framework utilizing reflective equilibrium to assess and assign responsibility to achieve a coherent and ethically acceptable equilibrium for all stakeholders.  Initially, we formally state all possible responsibility attribution principles for AI-enabled systems involving multiple parties. Subsequently, we apply evidence and moral principles to support the responsibility attribution accordingly.  We refer to the intuitions and principles as claims. 
To find the equilibrium, we formulate the problem as a constraint satisfaction problem using coherence theory and employ a computational approach, defining our solution as Computational Reflective Equilibrium (CRE). This computational framework provides a structured analysis that overcomes the limitations of conceptual approaches in dealing with dynamic and multifaceted scenarios. 
To enhance results, we incorporate hypothesis testing and gather feedback from those involved in or contributing to accountability measures to determine the initial activation level for each claim. We showcase that different initializations would lead to diverse responsibility distributions in the equilibrium. 

The computational approach aims to achieve a coherent and ethically justifiable equilibrium that minimizes conflicts and maximizes support, achieving consistency among the stakeholders. By employing computation, our framework not only enhances clarity in ethical reasoning but also demonstrates adaptability to diverse ethical contexts, promoting transparency and coherence in responsibility attribution. Computational Reflective Equilibrium (CRE) facilitates a dynamic balance among conflicting ethical principles, obligations, and evidence, offering context-sensitive solutions. To ensure transparency, it is imperative to communicate the principles and accountability standards to all stakeholders. Moreover, continuous monitoring and revisiting the equilibrium for necessary adjustments in case of discrepancies are also crucial components of the process. We use a medical decision-support system as a case study to illustrate the framework and provide computational results to highlight its properties.

The remainder of the paper is organized as follows. In Section II, we delve into the related work within the field. 
Section III outlines the computational reflective equilibrium (CRE) framework. In Section IV, we discuss the properties of the proposed method. The performance of CRE through simulations is presented in Section V. Lastly, we draw our conclusions in Section VI.

\section{Related Work}

\subsection{AI Responsibility}

The concept of responsibility has several connotations. In this study, our focus is on backward-looking responsibilities, specifically understanding responsibility as blameworthiness \cite{van2012problem}. A primary challenge in assigning responsibility in AI-induced incidents lies in identifying the morally responsible agent. While humans are traditionally associated with agency and accountability for their actions, it remains unclear whether AI technologies can be considered responsible agents themselves. Extensive discussions about the moral agency of AI have taken place in the ethics of computing and robot ethics literature \cite{cervantes2020artificial, hage2017theoretical, coeckelbergh2010moral, gunkel2020mind}. Depending on the moral theory adopted, one may choose to include or exclude AI as the responsible agent. Currently, there is no legal regulation treating AI as a responsible agent, but as technology evolves, future perspectives may emerge \cite{van2018we}. In our framework, we can accommodate both arguments as long as they are supported by the justified belief system in the end.

Once we identify the responsible agents, another challenge arises in how to attribute responsibilities among these agents, a problem often referred to as the ``many hands'' problem \cite{thompson1980moral}. Several works have focused on addressing this challenge. Duch et al. \cite{duch2015responsibility} applied weighted votes to distribute responsibility among multiple collective decision-makers, but this approach may not be applicable to technologies like AI, as we cannot collect their votes. Taddeo et al. \cite{taddeo2018ai} proposed the conceptual idea of distributed responsibility to address responsibility attribution in the AI context; however, practical solutions for its implementation are lacking. Other detailed research has focused on determining ``who did what at when'', seeking to establish causality and the role of each agent to attribute responsibility on a case-by-case basis \cite{liu2022blame, coeckelbergh2020artificial}. While there is no definitive correct answer to responsibility attribution,  what is notably missing is a general model addressing how to attribute responsibility in AI, supported by reasoning and justified outcomes, preferably in a computationally supported manner.

\subsection{Reflective Equilibrium}
In applied ethics, the reflective equilibrium (RE) model has been proposed as an approach to reconcile a pluralism of ethical views. RE enables decision-makers to achieve coherence and consistency in ethical reasoning by balancing and refining moral intuitions, principles, and theories \cite{daniels2003reflective}, through mutual adjustment and conceptual reflection. Typically, this process begins with expert intuitions and allows for both the revision of specific judgments and the underlying concepts themselves. It has demonstrated success in various areas, including justice \cite{rawls2017theory}, moral decision-making \cite{yilmaz2016coherence}, and public reasoning \cite{brandstedt2020rawlsian}. While Doorn \cite{doorn2012exploring} explored responsibility rationales in research and development using RE, their contribution was primarily descriptive. Yilmaz et al. developed a domain-specific language to establish coherence-governed models for ethical decision-making.  

\noindent\textbf{Discussion on Differences between RE and CRE}\\
The proposed computational reflective equilibrium (CRE) adapts the RE approach to settings involving multiple stakeholders and complex, dynamic scenarios (such as AI responsibility attribution), with two key differences. First, CRE allows the initial activation of claims to be informed by public preferences and empirical data, as well as expert judgment. Second, the revision process is currently limited to updating the acceptance or rejection of predefined claims and their network of support and conflict relations, rather than full-scale conceptual revision. While this limits some of the philosophical richness of RE, it allows for computational tractability, transparency, and stakeholder inclusion.
Building on the literature of RE and its associated AI frameworks, the CRE framework retains the iterative, coherence-seeking character of RE and can be continuously updated as new claims or evidence arise. Future work may explore how richer forms of conceptual revision might be integrated into the computational process.

\section{Computational Reflective Equilibrium to Accountability}

\begin{figure}[!t]
    \centering
  \includegraphics[width=1\linewidth]{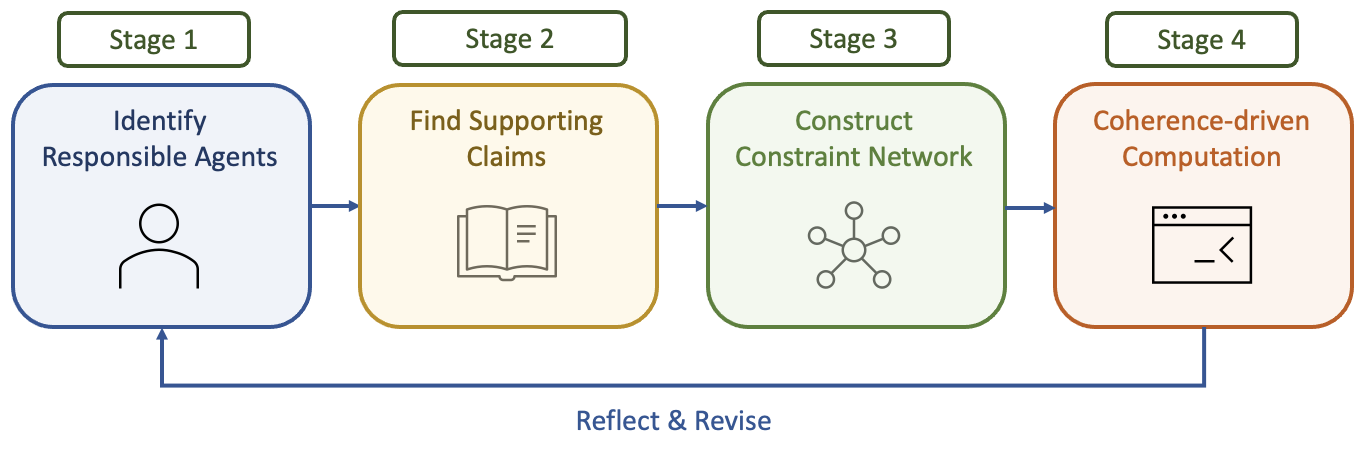}
  \caption{Workflow of computational reflective equilibrium approach to responsibility attribution.}
  \label{fig:overall}
\end{figure}

In this section, we describe the computational reflective equilibrium computation (CRE) process for achieving coherent and ethical responsibility attribution in AI-induced incidents. The basic workflow is depicted in Fig.~\ref{fig:overall}. The committee that implements CRE is referred to as the implementation committee (IC). Here, we focus on the example of a medical decision-support system for diagnosis and treatment to elaborate on the proposed approach.

\subsection{Stage $1$: Identify Responsible Agents}

In the initial stage, IC formulates hypotheses for accountability attribution that align with intuitions and identify all potential responsible agents involved in the AI incident \cite{cath2016reflective}. This process includes recognizing all relevant parties that can contribute to the final consequences. Importantly, the selection of responsible agents should not rest solely on the judgment of IC. To ensure rigor and legitimacy, this process must be anchored in legal and ethical precedents, informed by stakeholder mapping, and accompanied by transparent documentation of inclusion and exclusion criteria. Such practices reduce arbitrariness and strengthen the credibility of the attribution results.

As discussed earlier, there is deliberation on whether to include AI itself as a moral agent. This work asserts that such inclusion is justifiable as long as it is supported by theories and evidence. This highlights the adaptivity of the CRE framework, which accommodates diverse claims as initial input from various perspectives. At the same time, the IC should also consider opposing claims where the agent is not responsible. This comprehensive approach ensures a balanced consideration, incorporating both affirmative and negative perspectives for a thorough analysis of responsibility attribution.

For illustrative purposes, this work considers a scenario in which the AI decision-support system generates a diagnosis, and the doctor adopts a treatment plan based on the recommendation, ultimately resulting in an incorrect treatment for the patient. The focus is specifically on the interaction between the AI system and the doctor, excluding other factors such as the manufacturer producing the product, the physician collecting data, or the patient failing to adhere to the prescription. For a more comprehensive analysis, all these factors can be incorporated into the graph. In this instance, the following initial claims regarding responsibility attribution can be outlined, as depicted in Fig.~\ref{fig:1}.

\begin{figure}[!t]
    \centering
  \includegraphics[width=0.8\linewidth]{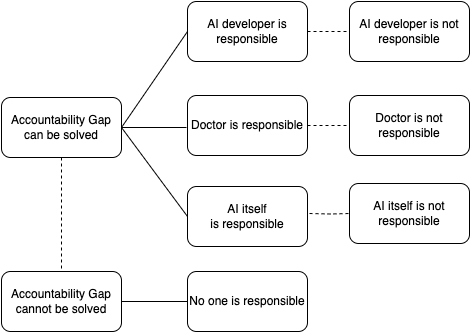}
  \caption{Initial claims about responsibility attribution in AI medical decision-support systems. Solid lines indicate supporting claims, while dotted lines represent conflicting claims.}
  \label{fig:1}
\end{figure}

\subsection{Stage $2$: Find Supporting Claims}

After identifying the initial claims, the next step involves formulating an initial set of theoretical principles or relevant facts that generalize or support these initial claims. For each initial claim, IC can seek supportive claims in the following directions \cite{thagard2012cognitive}:
\begin{enumerate}[leftmargin=*]
    \item Relative Facts: This entails identifying the domain knowledge or evidence that supports the initial claim. Such supportive claims can often be quantified through evaluations like hypothesis testing.
    \item Moral Concerns and Theories: This direction involves finding the generalized moral theory to ethically support the claim.
    \item Analogy: Drawing parallels with similar cases in other areas serves as an analogy to support the claim. For example, referring to robot-induced incidents can provide insights into AI-induced incidents as a support factor.
    \item Opposite opinions: In addition to identifying supportive claims, it is crucial to list possible opposing opinions to these claims, forming a more comprehensive analysis.
\end{enumerate}
By considering these claims, we can construct a coherent explanation of accountability that is acceptable and traceable. The details about the supportive claims for the illustrative AI medical decision-support system case can be found in Fig.~\ref{fig:network}.

It is important to highlight that the claims, including initial claims and their supportive claims, should not be arbitrary but rather directly or analogically related to the relevant domain. Holding any unrelated party accountable in the case of AI-induced incidents would be unreasonable. 
To mitigate arbitrariness in identifying supporting claims, candidate claims should be systematically derived from established sources such as professional codes of ethics, regulatory guidelines, and empirical studies. Structured stakeholder engagement (e.g., expert panels or participatory methods) can further help surface and validate claims, ensuring that the process is inclusive, transparent, and robust.
Moreover, several philosophers have suggested that RE is best interpreted as a hybrid of foundationalism and coherentism \cite{cath2016reflective}. This suggests that IC should adhere to the following constraints for the claims.
\begin{definition}[Claim Constraints]
The claims for computational reflective equilibrium should adhere to the following conditions:
\begin{enumerate}[leftmargin=*]
    \item The claim is directly or analogically related to the relevant domain,
    \item The claim must already be justified to some degree,
    \item The justification is non-inferential, based directly on the mere fact of the subject's believing or intuiting as they do.
\end{enumerate}
\end{definition}

\subsection{Stage $3$: Construct Constraint Network}

In the third stage, this work attempts to formulate the constraint network that describes the relations among all claims. The network can be represented as a graph $G=(V,E)$, where $V$ represents a finite set of claims and $E\subseteq V\times V$ represents the set of relations between two claims. These relations could be supports or conflicts. The support between the two components strengthens one's belief in certain claims. However, conflicts are likely to arise between one's initial claims and the supportive claims that aim to account for those claims. Such conflicts may arise within the initial claims or within the supportive claims, reflecting the challenge of determining accountability in complex systems like AI. 

Because the specification of support and conflict relations has a decisive influence on the equilibrium outcome, it is essential that these relations be justified explicitly. Empirical studies, such as research on human–automation blame attribution \cite{lei2021should}, can provide evidence-based grounding, while normative reasoning from ethical theory can offer additional justification \cite{kagan2018normative,yazdanpanah2023reasoning}. To further ensure reliability, sensitivity analysis can also be conducted across alternative specifications of support and conflict relations. This practice helps reduce the risk of bias or misplaced confidence in any single network configuration.

\begin{figure*}[!ht]
    \centering
    {\includegraphics[width=\textwidth]{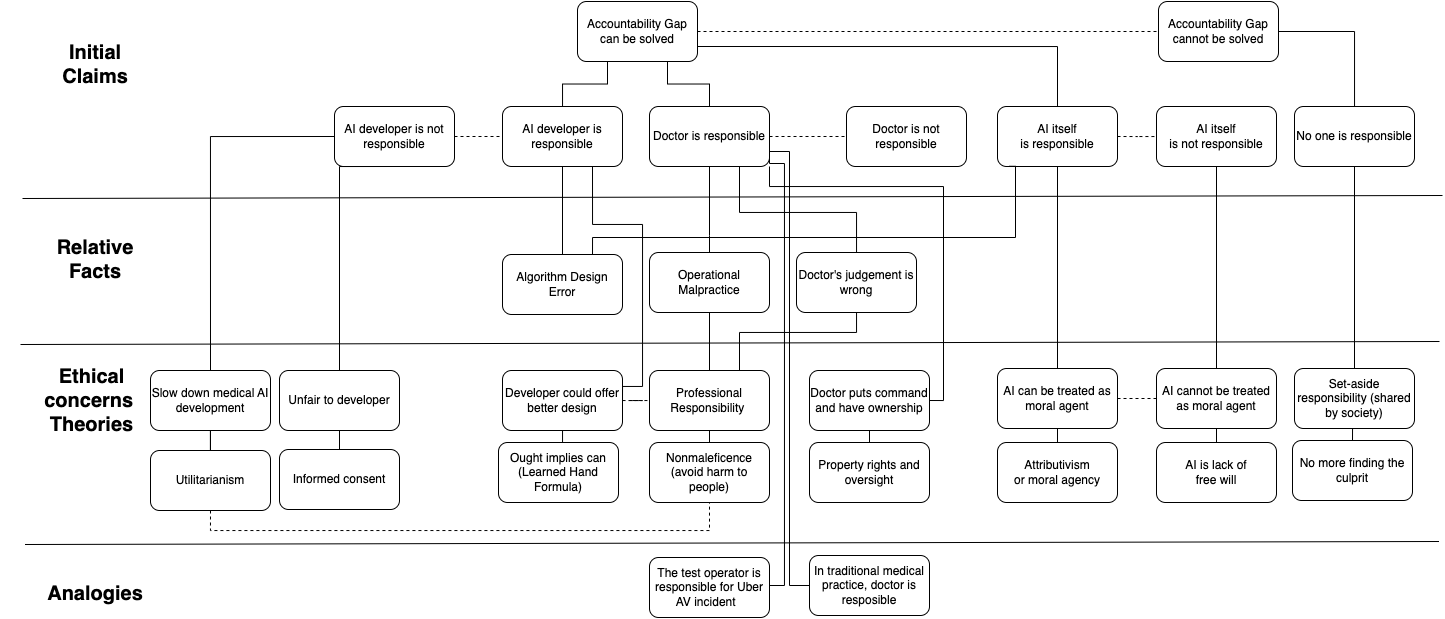}}
    \caption{Constraint network for responsibility attribution in the AI medical decision-support system case. Solid lines indicate supporting claims with positive constraints, while dotted lines represent conflicting claims with negative constraints.}
    \label{fig:network}
\end{figure*}

Inspired by the notion of constraint satisfaction studied in computer science, we can formalize support (coherence) and conflict (incoherence) relations as positive and negative constraints, respectively \cite{thagard1998coherence}.
\begin{definition}[Satisfaction conditions]
The coherence conditions for the positive and negative constraints are:
    \begin{enumerate}
        \item[(C1)] A positive constraint $(u,v)\in C^+$ is satisfied if and only if the elements $u$ and $v$ are both rejected or both accepted; 
        \item[(C2)] A negative constraint $(u,v)\in C^-$ is satisfied if and only if the element $u$ is accepted and $v$ is rejected, or conversely.
    \end{enumerate}
\end{definition}

In the illustrative AI medical decision-support case, the full constraint network with initial responsibility attribution claims and supportive claims are illustrated in Fig.~\ref{fig:network}, where solid lines indicate supporting claims with positive constraint and dotted lines represent conflicting claims with negative constraint.

\subsection{Stage $4$: Coherence-driven Computation}

Once we establish these positive and negative constraints, our goal is to identify a set of claims that exhibit coherence, forming our responsibility attribution result with supporting explanations. The process of computing reflective equilibrium can be understood as a coherence problem, seeking to satisfy as many constraints as possible. Formally, we can define the coherence problem for finding RE as follows.

\begin{definition}[Coherence problem]
    Given a graph $G = (V,E)$ with vertex set $V$ represents a finite set of elements (belief, propositions, evidence, etc.) and edge set $E\subseteq V\times V$ that partitions into positive constraints $C^+$ and negative constraints $C^-$, i.e., $C^+\cup C^- = E$ and $C^+\cap C^- = \emptyset$. Each constraint $(u,v)\in E$ is associated with a weight $w(u,v)\in\mathbb{R}$ that indicates the weight of the constraint. The coherence problem aims to partition the vertices $V$ into two sets, $A$ (accepted) and $R$ (rejected) such that the sum of the weights of the satisfied, given by:
    \begin{align}
        \max_{ A \cup R = V} \quad W(A,R) = \sum\limits_{\substack{(u,v)\in C^+ \\ \text{satisfy C1}}} w(u,v) + \sum\limits_{\substack{(u,v)\in C^-\\ \text{satisfy C2} }} w(u,v).
    \end{align}
\end{definition}

The challenge of coherence is recognized as NP-hard; however, practical approximations can be reliably employed. One computational approach to approximating coherence involves leveraging connectionist algorithms, such as neural networks. In neural science, this model aligns with the problem of maximizing harmony in Hopfield networks \cite{van2012intractability}. Under this interpretation, the focus shifts from partition identification to assigning each element an activation level $a:V\mapsto [-1,1]$ in a manner that maximizes the subsequent function:
\begin{align}
    \max_a H(a) = \sum_{u,v\in V} a(u) a(v) w(u,v).
\end{align}

\begin{lemma}
    Given a constraint network $G=(V,E)$ and weight function $w: E\mapsto \mathbb{R}$, there exists an activation level $a^*:V\mapsto\{+1,-1\}$ that maximizes $H(a)$.
\end{lemma}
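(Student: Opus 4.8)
The plan is to read the lemma as the statement that the supremum of the harmony function $H$ over the relaxed domain $[-1,1]^{|V|}$ introduced above is already attained at a binary vertex $a^*\in\{+1,-1\}^{|V|}$, and that such a maximizer exists. (Existence of a maximizer \emph{among} binary activations is by itself immediate, since there are only finitely many functions $V\to\{+1,-1\}$ and $H$ is real-valued; the content worth proving is that this binary optimum is in fact optimal over the whole relaxation.) I would establish this in two moves: first guarantee that a maximizer of $H$ exists somewhere on the relaxed domain, then argue that it can be pushed to a vertex of the hypercube without loss.

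For the first move I would invoke compactness. The relaxed feasible set $[-1,1]^{|V|}$ is closed and bounded, hence compact, and $H(a)=\sum_{u,v\in V} a(u)a(v)w(u,v)$ is a polynomial, hence continuous; by the extreme value theorem $H$ attains a global maximum at some $\bar a\in[-1,1]^{|V|}$. For the second move — the heart of the argument — I would exploit that $H$ is affine in each coordinate separately. Since the constraint network records support or conflict relations between two \emph{distinct} claims, $E$ carries no self-loops and the form has no diagonal term $a(u)^2 w(u,u)$; consequently, fixing every coordinate except $a(t)$, the restricted map
\begin{equation}
    x \mapsto H(a)\big|_{a(t)=x} = \alpha_t\, x + \beta_t
\end{equation}
is affine on $[-1,1]$, where $\alpha_t=\sum_{v\neq t}\big(w(t,v)+w(v,t)\big)a(v)$ is the linear coefficient and $\beta_t$ the constant part.

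An affine function on an interval attains its maximum at an endpoint, so replacing $a(t)$ by $\mathrm{sgn}(\alpha_t)$ (or by either of $\pm1$ when $\alpha_t=0$) cannot decrease $H$. Starting from the maximizer $\bar a$ and applying this coordinate-rounding step to each coordinate in turn, I would obtain an activation $a^*\in\{+1,-1\}^{|V|}$ with $H(a^*)\ge H(\bar a)$; since $\bar a$ was already a global maximizer, equality holds, so $a^*$ is the desired binary maximizer. Equivalently, one argues by induction on the number of fractional coordinates of $\bar a$, each rounding step strictly reducing that count while preserving optimality.

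The main obstacle I anticipate is precisely the diagonal assumption: the affine-in-each-coordinate property fails if negative self-weights $w(t,t)<0$ are admitted, in which case the restricted map becomes strictly concave and its maximum may lie in the interior, so a binary maximizer need not exist for the relaxed problem. I would therefore make explicit that the coherence network has no self-loops, which keeps $H$ multilinear and legitimizes the rounding. A secondary, purely bookkeeping point is the tie-breaking when $\alpha_t=0$: there both endpoints are equally good, so choosing either $\pm1$ preserves the value and the construction still terminates at a vertex of the cube.
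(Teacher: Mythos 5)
Your argument is correct and is essentially the rigorous version of what the paper only gestures at: the paper's proof consists of the single sentence that one ``only need[s] to consider'' activation values in $\{+1,-1\}$, followed by a ``(details)'' placeholder, and your compactness-plus-coordinatewise-rounding argument supplies exactly those missing details. Your observation that the multilinearity (hence the endpoint-rounding step) requires excluding negative self-weights $w(u,u)<0$ is a genuine hypothesis the paper leaves implicit and is worth stating explicitly.
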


The detailed proof of the lemma can be found in \cite{van2012intractability}. The lemma suggests that at least one partition can be identified to achieve reflective equilibrium. It's worth noting that the reached equilibrium offers the most fitting explanation only for the particular context and situation. Individuals with different preferences may reach different equilibria, each guided by distinct principles and judgments.

Algorithm~\ref{alg:compute} outlines the procedure for updating the activation levels using connectionist approximation. In this algorithm, $net^t_u=\sum_v \hat{w}(v,u)a^t(u)$ represents the net input to element $u\in V$. 
If $(v,u)\in C^+$, we assign $\hat{w}(v,u) = w(v,u)$; else if $(v,u)\in C^-$, we let $\hat{w}(v,u) = -w(v,u)$.
The parameter $\gamma\in(0,1)$ introduces a decay parameter, reducing the activation level of each element in each iteration. The values of $M$ and $m$, usually set to $M=1$ and $m=-1$, define the maximum and minimum activation ranges. The algorithm produces a decision for claims $u\in V$ where $a^*(u)>0$, indicating acceptance of the claim; otherwise, it is rejected. The accepted claims constitute a coherent and explainable set that aligns closely with ethical considerations. To determine the final responsibility attribution, we examine the accepted sets to identify which involved party should be accountable, along with the supportive claims that bolster the assignment. 

{\renewcommand{\arraystretch}{1.5}
\setlength{\textfloatsep}{2pt}
\begin{algorithm}[t]
\caption{Computational Reflective Equilibrium Algorithm}\label{alg:compute}
\KwIn{Constraint graph $G=(V,E)$, positive constraint set $C^+$, negative constraint set $C^-$, weight function $w:E\mapsto \mathbb{R}$, decay parameter $\gamma\in(0,1).$}
\vskip 1mm
Initialization: assign $a^0(u)$ for each element $u\in V$ \;
\Repeat{activation levels reach equilibrium}{
Update the activation of all the elements parallel:
\begin{align*}
    a^{t+1}(u) =  &a^{t}(u)(1-\gamma)\\
    &+\begin{cases}
        net^t_u\cdot (M - a^{t}(u)) & \text{if } net^t_u>0,\\
        net^t_u\cdot (a^{t}(u)-m) & \text{otherwise.}
    \end{cases}
\end{align*}
}
\vskip 1mm
\KwResult{If $a^*(u)>0$, the element $u$ is accepted. Otherwise, $u$ is rejected.} 
\end{algorithm}}

\subsection*{Initial Activation Level}

As mentioned earlier, the equilibrium is influenced by individuals' initial beliefs in the given claim. In this context, we argue that the initial activation level of each claim, $a^0(u)$ for all $u\in V$, effectively represents the initial belief associated with that specific claim. The following properties in cognitive science provide insights into the effect of the initial activation level.
\begin{itemize}[leftmargin=*]
    \item \textit{The Rich Get Richer Effect} \cite{grossberg1976adaptive}: Elements $u\in V$ that possess slight initial advantages, be it in terms of their external inputs or initial activation values, tend to magnify this advantage when compared to their competitors, specifically those denoted as $v\in V$ with a negative constraint $(u,v)\in C^-$.

    \item \textit{Resonance Effect} \cite{grossberg1987competitive}: If element $u\in V$ and $v\in V$ have mutually excitatory connections (positive constraints $(u,v)\in C^+$), then once one of the elements becomes active, they will tend to keep each other active. 
\end{itemize}
Building upon these effects, we present the following proposition.
\begin{proposition}
For an input claim $u\in V$ with higher initial activation levels $a^0(u)$, they possess greater priority and a higher likelihood of sustaining activation throughout the equilibrium, denoted as $a^*(u)= +1$.
\end{proposition}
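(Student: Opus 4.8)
The plan is to treat the connectionist update as gradient ascent on the harmony $H$ and to show that the final activation $a^*(u)$ depends monotonically on the initial activation $a^0(u)$, so that raising $a^0(u)$ can only push $u$ toward acceptance. Concretely, I would model the dynamics by the standard Hopfield update in which the net input to a unit is $\mathrm{net}(u)=\sum_{v} w(u,v)\,a(v)$ and the initial belief enters as a persistent external bias proportional to $a^0(u)$; since (for symmetric $w$ without self-loops) $\partial H/\partial a(u)=2\,\mathrm{net}(u)$, the trajectory ascends $H$ and, by the preceding Lemma, converges to a binary equilibrium $a^*\in\{+1,-1\}^{V}$ at which $a^*(u)=+1$ exactly when the accumulated drive on $u$ is positive. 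The goal is then to prove that this drive is nondecreasing in $a^0(u)$.

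First I would fix the weights $w$ and all other initial activations $a^0(v)$, $v\ne u$, and compare two trajectories that differ only at $u$, say $a^0(u)\le \tilde a^0(u)$. I would argue a coupling/monotonicity statement: the larger initialization dominates the smaller one at $u$ for all time, hence $a^*(u)\le \tilde a^*(u)$, which—since the equilibrium is binary—means $\tilde a^*(u)=+1$ whenever $a^*(u)=+1$. Second, I would invoke the two cited effects to explain how this local advantage propagates. By the \emph{Resonance Effect}, once $u$ is driven positive it reinforces, and is reinforced by, every $v$ with a positive constraint $(u,v)\in C^+$, locking these mutually supportive units into the accepted set; by the \emph{Rich Get Richer Effect}, the same advantage is amplified against each competitor $v$ with $(u,v)\in C^-$, suppressing $a(v)$ and thereby further increasing $\mathrm{net}(u)$. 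Combining these, a higher $a^0(u)$ enlarges the set of configurations of the remaining units from which the trajectory lands in the basin of attraction of an equilibrium with $a^*(u)=+1$; interpreting ``higher likelihood'' as this basin measure over uncertain initializations of the other units yields the claim.

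The main obstacle is that the harmony is an indefinite quadratic form, since $w$ takes both signs, so the Hopfield dynamics is not globally order preserving and the clean coupling above can fail whenever a frustrated (odd) cycle of negative constraints lets an increase in $a(u)$ indirectly decrease $\mathrm{net}(u)$. I would handle this by proving monotonicity only locally, restricting to a neighborhood of a stable equilibrium where the linearized update is a contraction with a controlled Jacobian sign pattern; alternatively, I would adopt the Gibbs reading $P(a)\propto\exp(H(a))$ with bias field $a^0$, where, provided the negative constraints form a balanced (frustration-free) subnetwork, a gauge transformation reduces the model to a ferromagnet and a correlation (FKG/GKS-type) inequality shows that $\Pr[a^*(u)=+1]$ is increasing in $a^0(u)$. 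Pinning down the precise conditions under which the global—rather than merely local—statement holds is the step I expect to demand the most care.
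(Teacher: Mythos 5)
The paper offers no formal proof of this proposition at all: it is stated as a direct consequence of the two informally described dynamical effects (\emph{Rich Get Richer} and \emph{Resonance}), so the second paragraph of your proposal --- where you invoke exactly those two effects to argue that a local advantage at $u$ propagates --- already reproduces the entirety of the paper's argument. Everything else you propose goes well beyond it, and the comparison is instructive. Your route (a coupling/monotonicity argument for two trajectories differing only in $a^0(u)$, followed by an FKG/GKS-type correlation inequality in a Gibbs reading of the harmony function) is the natural way to make ``higher likelihood of sustaining activation'' into a theorem, and it buys something the paper does not have: a precise statement of what ``likelihood'' means (basin measure or Gibbs probability) and of when the claim actually holds. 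The paper, by staying informal, never has to define these terms, which is why it can assert the proposition unconditionally.

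Your identified obstacle is genuine and is the real content here: because $w$ takes both signs, the quadratic form $H$ is frustrated whenever the negative constraints contain an odd cycle, and in that regime increasing $a^0(u)$ can indirectly \emph{decrease} $\mathrm{net}(u)$, so the global monotonicity (and hence the proposition as literally stated) can fail. Your proposed repairs --- restricting to a neighborhood of a stable equilibrium, or assuming the signed graph is balanced so a gauge transformation reduces it to a ferromagnet where FKG applies --- are the right tools, but note that they turn the proposition into a conditional statement, which is a correction to the paper rather than a proof of it. Two smaller points of misalignment with the paper's setup: the actual dynamics are the Grossberg-style update of Algorithm~2 (with decay $\gamma$ and saturation at $M$ and $m$), which is not literally gradient ascent on $H$, so your identity $\partial H/\partial a(u)=2\,\mathrm{net}(u)$ motivates but does not govern the trajectory; and the paper's Lemma~1 only asserts the \emph{existence} of a binary maximizer of $H$, not that the dynamics converge to a binary equilibrium, so your step ``converges to $a^*\in\{+1,-1\}^V$'' needs its own justification. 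None of this is a defect of your reasoning relative to the paper --- the paper simply does not attempt the proof --- but the global, unconditional version of the proposition should not be expected to survive your program without the balance or locality hypotheses you flag.
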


To quantitatively assess the initial activation levels of each input claim, we can categorize the claims into two sets. The first set comprises ethical, theoretical, or regulatory claims that are either unquantifiable or untestable, and the second set includes claims that can be evaluated through scientific testing or evidence gathering. We present the quantitative investigation methods for these two types of claims in the following sections.

\subsubsection{Public Preference}
Individuals often hold diverse intuitions or perceptions regarding a single ethical claim. For instance, some may believe that AI developers should be held accountable if algorithms misinform doctors, while others may argue that such accountability could discourage innovation and slow the advancement of AI technologies. These disagreements illustrate the challenge of translating ethical intuitions into quantitative values, since there is no direct metric for evaluating the relative importance of the claims.


For claims that cannot be tested empirically, initial activation levels are therefore assessed through subjective judgments. In the proposed framework, individuals have the flexibility to initialize values based on their personal judgments, where $a^0(u)=-1$ indicates strong disagreement, $a^0(u)=0$ denotes neutrality, and $a^0(u)=1$ signifies strong agreement.
Public preferences, elicited through surveys, can serve as one method for setting the initial activation levels of such claims. This approach reflects a democratic commitment to ensuring that responsibility attribution aligns with the values and intuitions of the stakeholders involved.
\begin{proposition}
    Given the claim $u\in V$, Let $f_u(x)$ be the probability distribution function representing the distribution of preferences $x\in[-1,1]$ for claim $u$ across the population. The initial activation level of claim $u$ can be represented as
    \begin{align}
        a^0(u) = \int_{-1}^1 x f_u(x)dx.
    \end{align}
\end{proposition}


A further consideration concerns the scope of the surveyed population \cite{rea2014designing}. Depending on the application, surveys may target local communities directly impacted, national or regional populations that share legal and cultural contexts, or global populations to emphasize universality. Each choice has distinct implications: local surveys provide contextual sensitivity, while broader populations support generality and fairness. The scope must therefore be explicitly justified, and the societal trade-offs of each choice must be made transparent.

In practice, the activation value can be derived by assessing people's acceptance of a specific claim through surveys, seeking expert opinions, referring to regulatory suggestions, or consulting academic research. In many contexts, a hybrid approach that combines public and expert input provides a more balanced and legitimate foundation for initialization. The integration of public preference aims to translate the intuitions and beliefs of individuals into practical data, thereby contributing to the balancing process. This approach aligns with the framework in medical ethics \cite{beauchamp2003methods}, emphasizing autonomy, beneficence, nonmaleficence, and justice principles, all of which are closely related to the well-being and perspectives of the people involved. However, it is worth noting that the quantitative survey should have a sufficient sample size to draw a conclusion and should be designed carefully to explicitly account for factors such as population scope, fairness in representation, and the balance between public and expert input to eliminate potential bias. Otherwise, the results could incline towards particularity rather than universality \cite{dunn2012toward}.

\subsubsection{Quantitative Investigation}

For claims of the second type, their initial activation values can be quantified through evaluations. The key lies in investigating whether empirical evidence supports the claim. One appropriate approach for determining these values is through hypothesis testing. In line with the framework outlined in \cite{ge2022accountability}, we employ hypothesis testing to illustrate the computation of initial activation levels for individual claims. It's worth noting that alternative evaluation methods are also acceptable.

Consider a claim in the constraint network that states that ``the developed algorithm has no error''. Assume the claim has an authentic binary type $\Theta = \{0,1\}$, where $\theta=1$ means the statement is true and $\theta=0$ means the statement is false. Since the algorithm performance cannot be directly accessed but only can be tested through operations, we consider the following. The responsible party of this statement (algorithm developer) sends a message to the public stating whether the statement is true or not, denoting $m\in\mathcal{M} = \{0,1\}$. The doctor decides how to incorporate the algorithm into medical practices, denoted as an operation function $\delta: \mathcal{M} \mapsto \mathcal{D}$. The final performance follows a distribution $p_y(y;\theta,\delta(m))$. Consider a sequence of repeated but independent investigation observations $Y^k=\{y_1, y_2, \cdots, y_k\}$, $k\in \mathbb{N}$. Hypothesis $H_0$ is set to be the case when the observations follow the anticipated performance given the received message $m$ and $H_1$ otherwise. Depending on whether $H_0$ or $H_1$ holds, each observation $y_i$ follows the respective distribution:
    \begin{align}
            &H_0: \, y_i \sim f_m(y|H_0) = p_y(y;\theta = m,\delta(m)), \\
            &H_1: \, y_i \sim f_m(y|H_1) = p_y(y;\theta = \neg m,\delta(m)).
    \end{align}
    
The optimal Bayesian investigation rule is based on the likelihood ratio test (LRT). LRT provides the decision rule that $H_1$ is established when $ L(Y^k)$ exceeds a defined threshold value $\tau\in\mathbb{R}$; otherwise, $H_0$ is established. It can be formulated by 
    \begin{align}
         L(Y^k)=\prod_{j=1}^k\frac{p_y(y_j;\delta(m)|\theta=\neg m)p_\theta(\theta = \neg m)}{p_y(y_j;\delta(m)|\theta = m)p_\theta(\theta = m)} \mathop{\gtreqless}_{H_0}^{H_1} \tau.
    \label{LRT}
     \end{align}
where $p_\theta$ is the prior distribution of the claim, which indicates the prior probability that the developed algorithm has an error or not.
Assume the investigation cost is symmetric and incurred only when an error occurs. In the binary case, the optimum decision rule will consequently minimize the error probability, and the threshold value $\tau$ in LRT \eqref{LRT} will reduce to $\tau = \Pr(H_0)/\Pr(H_1)$, where $\Pr(H_i)_{i=0,1}$ is the prior distribution of the hypothesis, indicating the reputation of the developer.

\begin{definition}[Claim Authenticity]
\label{accountability}
For a given claim $u\in V$, given an investigation rule,  i.e., the threshold $\tau_u$, claim authenticity $P_A \in [0,1]$ is defined as the probability of correct establishment of hypothesis $H_1$ based on the observations $Y^k$ and message $m$, which is given by
\begin{align}
    P_A(\tau_u)= \int_{\mathcal{Y}_1} f_m(Y^k|H_1)dy^k,
    \label{eq:padef}
\end{align}
where $\mathcal{Y}_1$ is the observation space in which $\mathcal{Y}_1 = \{Y^k: L(Y^k)\geq \tau_u\}$.
\label{def:1}
\end{definition}

{\renewcommand{\arraystretch}{1.5}
\setlength{\textfloatsep}{2pt}
\begin{algorithm}[t]
\caption{Claim Authenticity Investigation Procedure}\label{alg:acc}
\KwIn{Original claim $u$, received message $m$, operation function $\delta(m)$, and reputation regarding the claim $\pi_u$\;}
\vskip 1mm
Establish hypotheses $H_0$ and $H_1$ based on received message $m$\;
Determine the investigation rule $\tau_u$ in LRT based on the reputation  $\pi_u$\;
Perform $k$ times of \textit{i.i.d.} investigation tests and record the observations $Y^k$\;
Compute the claim authenticity according to \eqref{eq:padef} \;
\vskip 1mm
\KwResult{Claim Authenticity $P_A(\tau_u)$. }
\end{algorithm}}

The claim authenticity Investigation investigation procedure is summarized in Algorithm~\ref{alg:acc}. $P_A(\tau_u)$ indicates the probability that the claim holds within the constraint network. This value is a probability that lies within the range $P_A(\tau_u)\in[0,1]$. To effectively represent the initial activation level, it is necessary to normalize the outcomes of the claim authenticity assessment to fit within the initial activation level region $a^0(u)\in[-1,1]$.
\begin{proposition}
    Given the claim authenticity investigation result $P_A(\tau_u)\in[0,1]$ of a claim $u\in V$, the initial activation level of the element is
    \begin{align}
        a^0(u) = 2P_A(\tau_u)-1.
    \end{align}
\end{proposition}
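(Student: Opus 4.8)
The plan is to characterize the mapping from the accountability probability $P_A(\tau_u)\in[0,1]$ to the initial activation level $a^0(u)\in[-1,1]$ by a small set of natural desiderata, and then to show that the affine map $p\mapsto 2p-1$ is the unique function meeting them. First I would fix the three requirements that any reasonable normalization must satisfy: (i) it maps the accountability range $[0,1]$ onto the activation range $[-1,1]$; (ii) it respects the extreme cases, sending $P_A=0$ (the claim never correctly establishes $H_1$, hence should be rejected) to $a^0=-1$ and $P_A=1$ (the claim is always correctly established, hence should be accepted) to $a^0=+1$; and (iii) it is order-preserving, so that a higher accountability probability yields a higher initial activation, consistent with the rich-get-richer effect and the earlier proposition that elements with higher $a^0$ enjoy greater priority.

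Second, I would restrict attention to the simplest class of monotone normalizations, namely affine maps $g(p)=\alpha p+\beta$, and impose the boundary conditions from (ii). Solving $g(0)=\beta=-1$ and $g(1)=\alpha+\beta=+1$ yields $\alpha=2$ and $\beta=-1$, so $g(p)=2p-1$, which is exactly the claimed formula $a^0(u)=2P_A(\tau_u)-1$.

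Third, I would verify that this $g$ indeed satisfies all of the desiderata. Since $\alpha=2>0$, the map is strictly increasing, so (iii) holds; being a continuous strictly increasing function on a compact interval with $g(0)=-1$ and $g(1)=1$, it is a bijection from $[0,1]$ onto $[-1,1]$ by the intermediate value theorem, giving (i); and the neutral accountability $P_A=\tfrac12$, representing indecisive evidence for and against the claim, is sent to $g(\tfrac12)=0$, the neutral activation level, confirming that the mapping treats balanced evidence correctly.

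The main obstacle is not computational but justificatory: the affine formula only follows once one accepts that the normalization should be order-preserving and pinned at the endpoints, so the real work lies in arguing that these are the right modeling constraints. Any strictly increasing bijection from $[0,1]$ to $[-1,1]$ would preserve the ranking of claims, but the endpoint conditions together with linearity single out $2p-1$ as the canonical, scale-faithful choice that sends equal spacing in probability to equal spacing in activation. I would therefore emphasize in the write-up why linearity, rather than a logistic or piecewise alternative, is the appropriate default when no further information about the distribution of $P_A$ is available.
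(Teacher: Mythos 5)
Your derivation is correct and matches the paper's intent: the paper states this proposition without any proof, simply presenting $a^0(u)=2P_A(\tau_u)-1$ as the linear normalization carrying the probability range $[0,1]$ onto the activation range $[-1,1]$. Your endpoint-plus-monotonicity characterization of the affine map (and your candid remark that linearity is a modeling choice rather than a theorem) is exactly the implicit argument in the paper, made explicit.
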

The individual claim investigation process integrates one's prior beliefs about the claim with the testing performance acquired through investigative efforts. This combination of beliefs and observations aims to provide a more comprehensive interpretation, thereby guiding the allocation of accountability to the relevant element.

\subsection{Result and Revision}

After determining the initial activation levels for all input claims, the reflective equilibrium is computed using the approximation algorithm outlined in Algorithm~\ref{alg:compute}. The algorithm returns a set of coherent and mutually supporting claims. By inspecting this set of accepted claims, we can identify which of the initial responsibility claims are accepted. Based on these accepted claims, responsibility attribution under CRE is determined. This attribution may involve one or multiple parties responsible for the AI-induced incidents. It is important to note that CRE provides responsibility attribution in the sense of identifying who should be responsible, but it does not quantify the extent of their responsibility.

It is important to recognize that the CRE is a dynamic process requiring ongoing reflection and revision. This need arises for various reasons. Firstly, public preferences towards certain ethical claims may vary based on their perception of the claim under specific circumstances, leading to potential changes in the initial activation levels. Additionally, as AI technology advances, new claims or estimation methods may emerge, allowing for more informed computation based on additional evidence. Moreover, regulations or supporting principles may evolve over time. All these considerations emphasize that the CRE obtained is coherent and optimal only for the current moment. Therefore, it is essential to periodically revisit the initial stage and repeat the process whenever the current responsibility attribution appears inappropriate or requires adjustment.

\section{Properties}
In this section, we present several crucial properties of the CRE framework for responsibility attribution in AI-induced incidents.

\subsection{Traceability}

In this work, traceability in CRE is defined as the ability to trace responsibility attribution back to claims that lead to the decision. This creates an explicit audit trail that links outputs to inputs, preventing the results of CRE from becoming a pure black-box process.
Responsibility attribution derived from CRE is therefore traceable because it can be traced back to the claims and the relations embedded within the acceptance set. Initial claims capture intuitions about potential responsibility, while supportive claims reinforce them through principles, evidence, analogies, and ethical considerations. Together, these elements clarify why particular decisions are made and why opposing views are rejected, thereby enhancing transparency and promoting public trust in AI systems.
We acknowledge, however, that the traceability of CRE still has limitations. In large and complex networks, propagation paths may be difficult to present in a human-readable form, and concise explanations may require additional analysis tools. Moreover, the credibility of such paths depends on the transparency and rigor of the underlying network construction, including the identification of agents, claims, and support/conflict relations. When these inputs are well justified and documented, CRE can then provide a clear account of how and why responsibility is attributed.

\subsection{Coherence}

CRE is an equilibrium seeking to satisfy the maximum number of constraints while minimizing conflicts within the final decisions, aiming to establish a coherent system of beliefs that people would find acceptable. It is important to note that the result is not necessarily truth-conducive; instead, it tries to acquire justified beliefs under ethical constraints. This coherence goes beyond scientific proof or evidence, as it also takes into consideration people's preferences and acceptances towards a certain claim. Ultimately, the final judgment is made by humans, and accountability serves as a means to ensure satisfaction with the results, providing a balanced and coherent framework that resonates with the human perspective.

\subsection{Adaptivity}

The adaptivity of CRE is evident in its continuous revision and reflection process. When conditions change, the responsibility attribution can be promptly adjusted to align with the current situation. CRE is not a one-time iteration but an evolving procedure that encourages adaptation to challenges and diverse perspectives. While objections against RE in philosophy highlight its potential arbitrariness based on initial beliefs or intuitions, this characteristic proves advantageous in the context of responsibility attribution. Given the dynamic nature of situations, the ability to adapt equilibrium along the way is crucial. A fixed accountability framework could be exploited by malicious actors and would ultimately lack resonance with people's evolving understanding of the subject.

\begin{table*}[ht]
\centering
\fontsize{7}{9}\selectfont
\caption{Baseline Parameter Settings. The claim abbreviations correspond to those in Fig.~\ref{fig:network}.}
\label{tab:parameters}
\begin{tabular}{|l|*{16}{c|}}
\hline
 \rowcolor{Gray}
\textbf{Claim} & AGS & AGNS & AIDR & AIDNR & DR & DNR & AIR &  AINR & NR & DE & OM & DJW & SLOW & UT & UNFAIR\\
\hline
$a^0(u)$ & 0.2 & 0.1 & 0.01 & -0.01 & 0.01 & -0.01 & -0.2 & 0.1 & -0.2 & 0.1 & 0.01 & 0.2 & 0.01 & 0.01 & 0.01 \\
\hline
\rowcolor{Gray}
\textbf{Claim} &  INFO & BETTER & OIC & PRO & NON & OWN & RIGHT  & AIM&  ATT & AINM & LACK & SET & FIND & UBER & PRAC\\
\hline
$a^0(u)$  &  0.3 & 0.3 & 0.01 & 0.5 & 0.7 & 0.01 & 0.3 &  -0.3&  0 & 0.3 & 0.5 & -0.2 & 0 & 0.3 & 0.6\\
\hline
\end{tabular}
\end{table*}

\section{Case Study}

In this section, we use the medical decision-support system illustrated in Fig.~\ref{fig:network} as a case study to discuss responsibility attribution under different situations. We evaluate the performance of our CRE model using numerical experiments, implemented in a self-built Python simulator. The weight of each edge is set to $w(u,v)=1, \forall (u,v)\in E$, and the decay parameter $\gamma$ is set to be $0.05$. For the base case value, the initial activation levels are listed in Table~\ref{tab:parameters}. The claim abbreviations correspond to those in Fig.~\ref{fig:network}.

\subsection{Case $1$: Algorithm Design Error}

Consider a scenario where quantitative investigation reveals a significant design error in the algorithm. The claim authenticity for the algorithm design error (DE) is assessed to be $P_A(u)=0.9$. In the constraint network, we initialize the activation level of this claim as $a^0(\text{DE})=2P_A(u)-1=0.8$. Additionally, we assume that the society does not strongly agree with the idea that AI could be a moral agent, setting $a^0(\text{AIM})=-0.3$ and $a^0(\text{AINM})=0.3$ 

\begin{figure*}[!ht]
  \centering
   \subfloat[\small\textit{Initial}]{\includegraphics[width=0.495\linewidth]{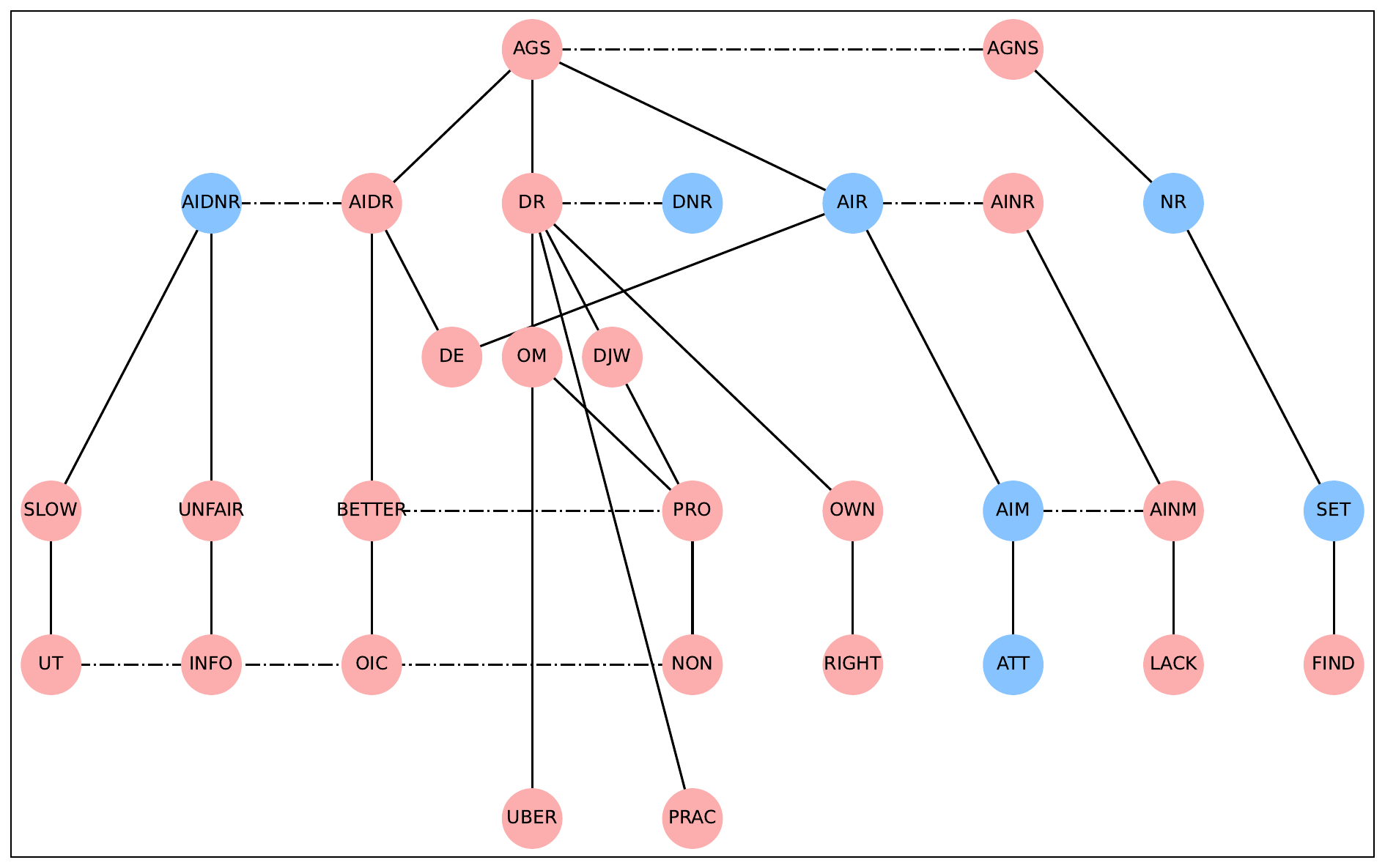}\label{fig:base}}
   \hfill
   \subfloat[\small\textit{Equilibrium}]{\includegraphics[width=0.495\linewidth]{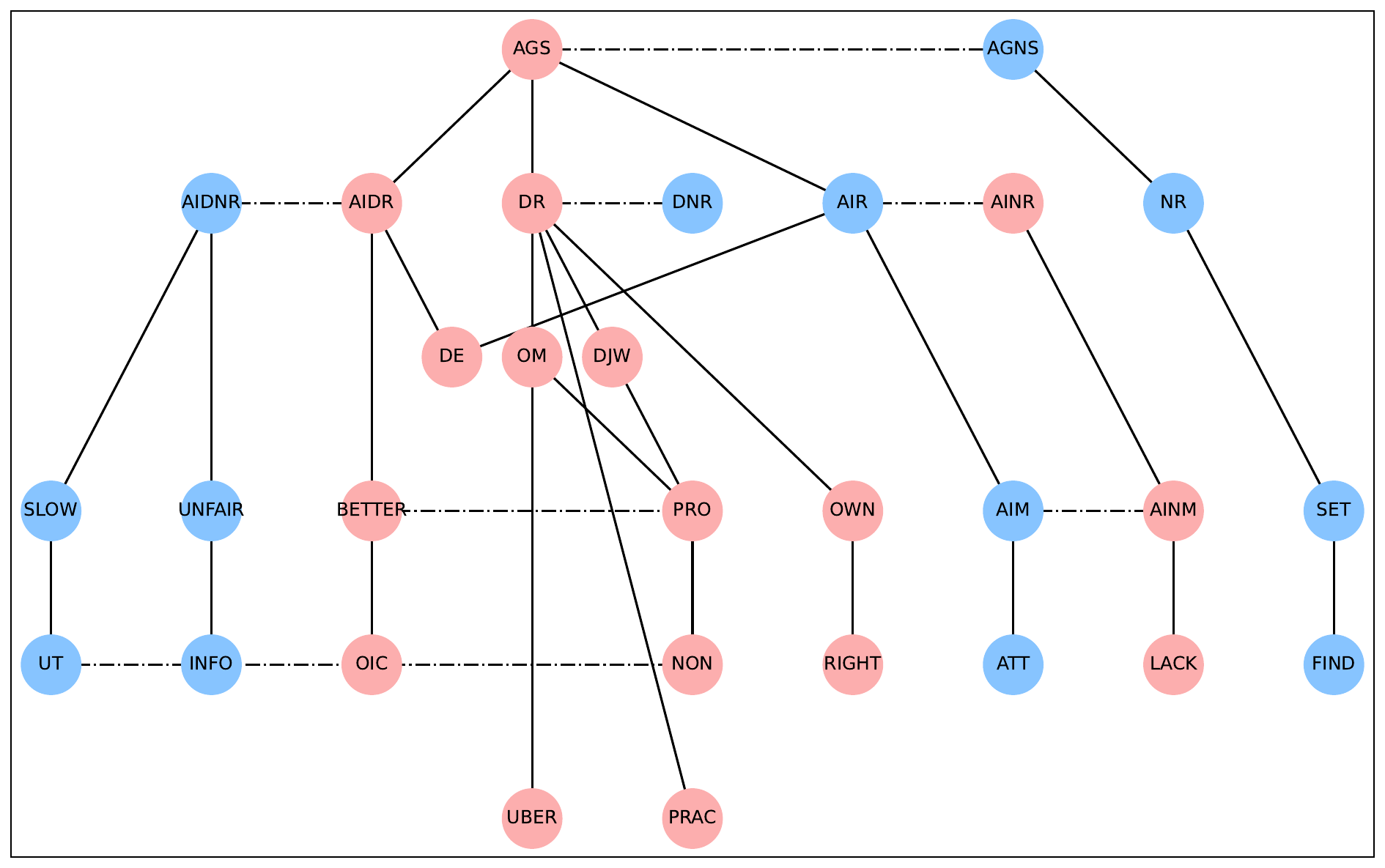}\label{fig:design}}
   \caption{Constraint network illustrating algorithm design errors. The node positions correspond to those in Fig.~\ref{fig:network}. Nodes colored in red represent accepted claims; any initial responsibility claim falling within this set is accepted for responsibility attribution. Nodes in blue represent rejected claims.}\label{fig:designcompare}
  \end{figure*}

Fig.~\ref{fig:designcompare} illustrate the initial claim status (Fig.~\ref{fig:base})and the claim status at the equilibrium (Fig.~\ref{fig:design}).
The evidence suggesting algorithm design error enhances the probability that the algorithm developer should be responsible for the incident, resulting in the rejection of the opposing claim that the algorithm developer should not be responsible. 
Furthermore, the analogy cases and ethical concerns indicate that the doctor should also share responsibility for the patient, even with the algorithm design error, as the doctor holds professional responsibility and should exercise careful judgment if the AI is wrong. 
Since we initialize the idea with the belief that AI should not be a moral agent, AI itself is not attributed responsibility in this case. Finally, the equilibrium suggests that people believe accountability should be assigned to individuals involved in the incident, rather than expecting society to bear the loss caused by such an AI system.

\subsection{Case $2$: Doctor Malpractice}

Next, we consider the scenario where the doctor has proved to have operational malpractice through investigation. In this case, we keep the base initial activation value in Table~\ref{tab:parameters} but set the initial activation level of the claim operational malpractice to $a^0(\text{OM})=0.6$ and the claim doctor's judgment is wrong to $a^0(\text{DJW})=0.2$.

\begin{figure}[!t]
    \centering
  \includegraphics[width=\linewidth]{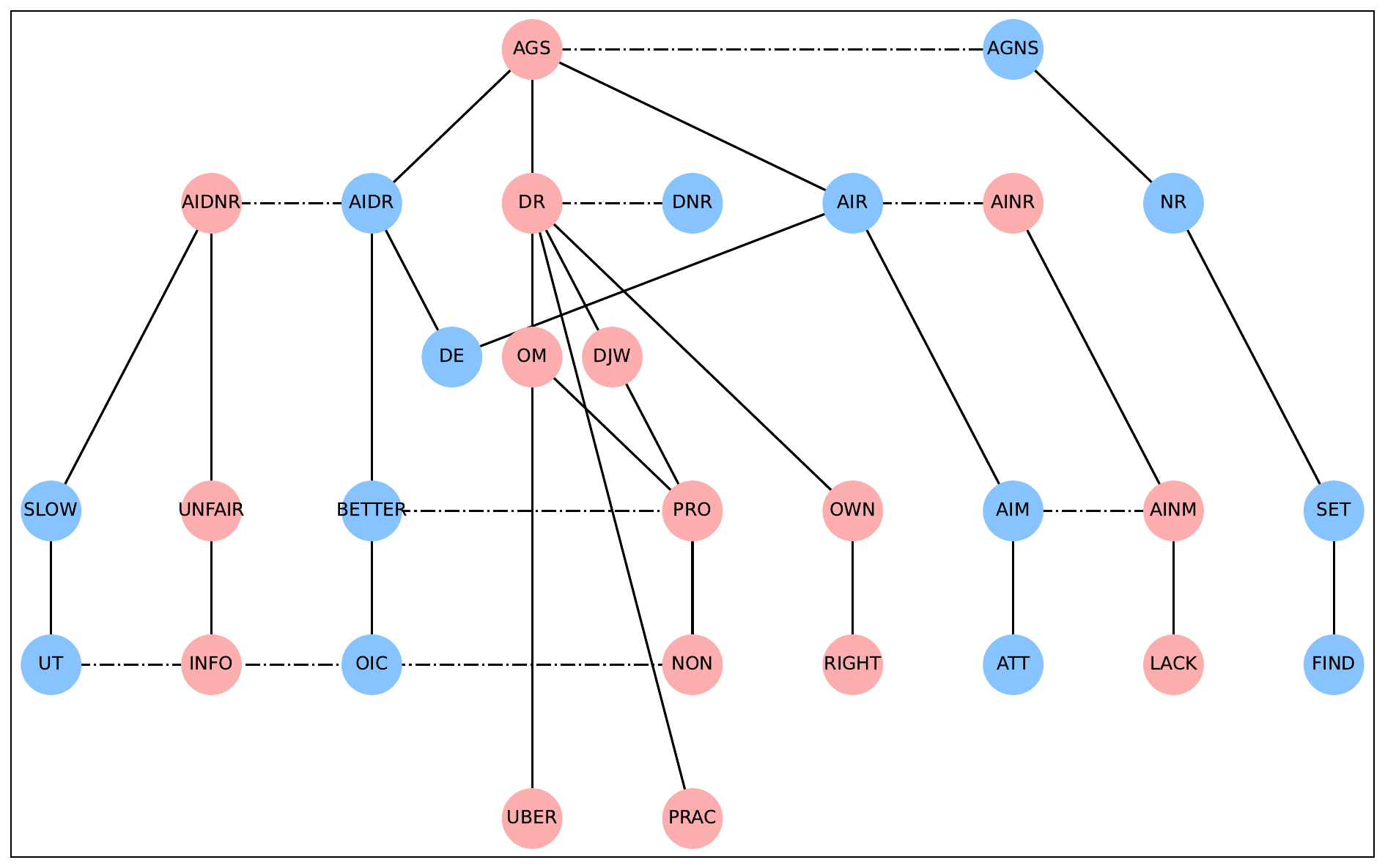}
  \caption{Constraint network with doctor operational malpractice at equilibrium.}
  \label{fig:operate}
\end{figure}

The division of the equilibrium claim set is illustrated in Fig.~\ref{fig:operate}. Unlike the previous case, only the doctor is determined to be responsible for the incident in this scenario. This is attributed to the high initial value of the operational malpractice claim. Additionally, the positive activation level of the analogy case provides support for the claim. In traditional medical operations, the doctor is responsible for any malpractice. Besides, in the Uber autonomous vehicle case, the responsibility is determined to be attributed to the test operator, instead of the algorithm developer. All these analogy cases provide legal support to the new area of AI-induced incidents, which is lacking regulation or laws for now.

\subsection{Case $3$: Social Belief in Collective Responsibility}

In the final scenario, we explore the societal belief that responsibility for AI-induced incidents should be collectively shared, diverging from the instinct to pinpoint an individual culprit. Despite challenging traditional intuitions, we examine the consequences of this collective responsibility attribution, envisioning a situation where such a belief is enforced by the government or other factors. In this setting, we set $a^0(\text{SET})$ and $a^0(\text{FIND})$ to be $0.8$.

As depicted in Fig.~\ref{fig:dis}, at the equilibrium, the claims accepted indicate that no individual should be held responsible for the incident. By embracing this perspective, society can collectively address the responsibility gap, proposing a shared responsibility model where funds are allocated for compensation. This approach provides a practical solution to dissolve the problem of individual responsibility while ensuring compensation in AI-induced incidents.

\begin{figure}[!t]
    \centering
  \includegraphics[width=\linewidth]{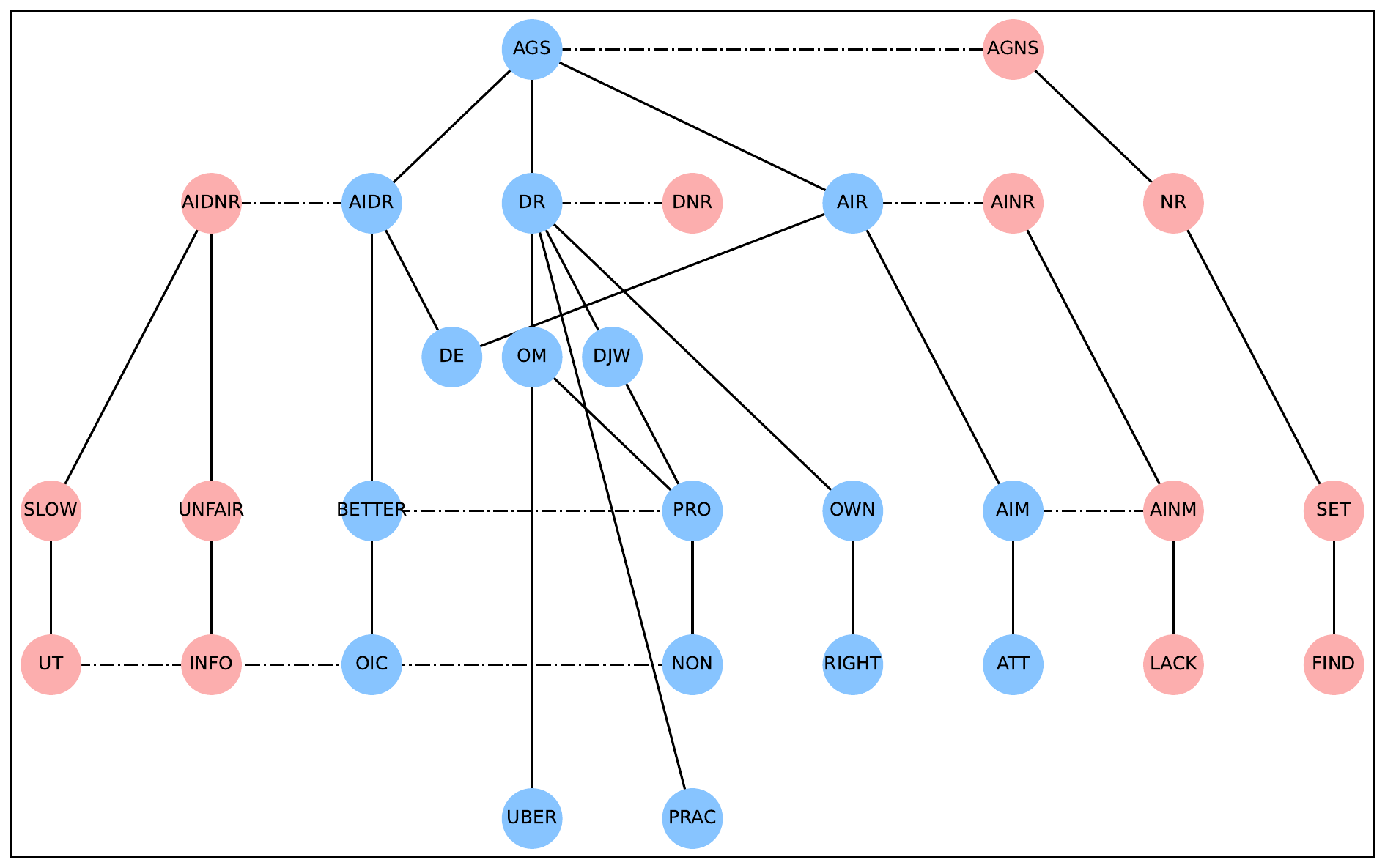}
  \caption{Constraint network with social belief in collective responsibility at equilibrium. }
  \label{fig:dis}
\end{figure}

\section{Conclusion}

Responsibility attribution and accountability in the realm of AI pose intricate challenges in today's society. In this study, we introduce a computational reflective equilibrium (CRE) approach to responsibility attribution in AI-induced incidents, with the goal of achieving a coherent and ethically acceptable equilibrium for all stakeholders. We describe the workflow of the CRE computation process and discuss the importance of the initial activation level in equilibrium computation. Using an AI-assisted medical decision-support system as an illustrative example, we demonstrate how different initialization leads to distinct responsibility attributions. Our framework offers traceability in responsibility attribution reasoning, coherence by ensuring mutual support among claims, and adaptability to varying ethical perspectives. It is essential to note that the framework is not necessarily truth-conducive but serves as a tool to provide ethically reasoned justifications that are coherently acceptable to all stakeholders. To enhance precision in responsibility attribution, continuous monitoring, revision, and reflection of the claims and activation levels are imperative for adjustments, contributing to the development of a more sustainable and enduring framework.

\bibliographystyle{IEEEtran}
\bibliography{ref}

\begin{thebibliography}{10}
\providecommand{\url}[1]{#1}
\csname url@samestyle\endcsname
\providecommand{\newblock}{\relax}
\providecommand{\bibinfo}[2]{#2}
\providecommand{\BIBentrySTDinterwordspacing}{\spaceskip=0pt\relax}
\providecommand{\BIBentryALTinterwordstretchfactor}{4}
\providecommand{\BIBentryALTinterwordspacing}{\spaceskip=\fontdimen2\font plus
\BIBentryALTinterwordstretchfactor\fontdimen3\font minus \fontdimen4\font\relax}
\providecommand{\BIBforeignlanguage}[2]{{%
\expandafter\ifx\csname l@#1\endcsname\relax
\typeout{** WARNING: IEEEtran.bst: No hyphenation pattern has been}%
\typeout{** loaded for the language `#1'. Using the pattern for}%
\typeout{** the default language instead.}%
\else
\language=\csname l@#1\endcsname
\fi
#2}}
\providecommand{\BIBdecl}{\relax}
\BIBdecl

\bibitem{biden2023executive}
J.~R. Biden, ``Executive order on the safe, secure, and trustworthy development and use of artificial intelligence,'' 2023.

\bibitem{10457538}
J.~R. Schoenherr and R.~Thomson, ``When ai fails, who do we blame? attributing responsibility in human–ai interactions,'' \emph{IEEE Transactions on Technology and Society}, vol.~5, no.~1, pp. 61--70, 2024.

\bibitem{ge2022accountability}
Y.~Ge and Q.~Zhu, ``Accountability and insurance in iot supply chain,'' \emph{arXiv preprint arXiv:2201.11855}, 2022.

\bibitem{rawls2017theory}
J.~Rawls, ``A theory of justice,'' in \emph{Applied Ethics}.\hskip 1em plus 0.5em minus 0.4em\relax Routledge, 2017, pp. 21--29.

\bibitem{daniels2003reflective}
N.~Daniels, ``Reflective equilibrium,'' 2003.

\bibitem{daniels1979wide}
------, ``Wide reflective equilibrium and theory acceptance in ethics,'' \emph{The journal of philosophy}, vol.~76, no.~5, pp. 256--282, 1979.

\bibitem{van2012problem}
I.~Van~de Poel, J.~Nihl{\'e}n~Fahlquist, N.~Doorn, S.~Zwart, and L.~Royakkers, ``The problem of many hands: Climate change as an example,'' \emph{Science and engineering ethics}, vol.~18, pp. 49--67, 2012.

\bibitem{cervantes2020artificial}
J.-A. Cervantes, S.~L{\'o}pez, L.-F. Rodr{\'\i}guez, S.~Cervantes, F.~Cervantes, and F.~Ramos, ``Artificial moral agents: A survey of the current status,'' \emph{Science and engineering ethics}, vol.~26, pp. 501--532, 2020.

\bibitem{hage2017theoretical}
J.~Hage, ``Theoretical foundations for the responsibility of autonomous agents,'' \emph{Artificial Intelligence and Law}, vol.~25, pp. 255--271, 2017.

\bibitem{coeckelbergh2010moral}
M.~Coeckelbergh, ``Moral appearances: emotions, robots, and human morality,'' \emph{Ethics and Information Technology}, vol.~12, pp. 235--241, 2010.

\bibitem{gunkel2020mind}
D.~J. Gunkel, ``Mind the gap: responsible robotics and the problem of responsibility,'' \emph{Ethics and Information Technology}, vol.~22, no.~4, pp. 307--320, 2020.

\bibitem{van2018we}
R.~Van den Hoven~van Genderen, ``Do we need new legal personhood in the age of robots and ai?'' in \emph{Robotics, AI and the Future of Law}.\hskip 1em plus 0.5em minus 0.4em\relax Springer, 2018, pp. 15--55.

\bibitem{thompson1980moral}
D.~F. Thompson, ``Moral responsibility of public officials: The problem of many hands,'' \emph{American Political Science Review}, vol.~74, no.~4, pp. 905--916, 1980.

\bibitem{duch2015responsibility}
R.~Duch, W.~Przepiorka, and R.~Stevenson, ``Responsibility attribution for collective decision makers,'' \emph{American Journal of Political Science}, vol.~59, no.~2, pp. 372--389, 2015.

\bibitem{taddeo2018ai}
M.~Taddeo and L.~Floridi, ``How ai can be a force for good,'' \emph{Science}, vol. 361, no. 6404, pp. 751--752, 2018.

\bibitem{liu2022blame}
P.~Liu and Y.~Du, ``Blame attribution asymmetry in human--automation cooperation,'' \emph{Risk Analysis}, vol.~42, no.~8, pp. 1769--1783, 2022.

\bibitem{coeckelbergh2020artificial}
M.~Coeckelbergh, ``Artificial intelligence, responsibility attribution, and a relational justification of explainability,'' \emph{Science and engineering ethics}, vol.~26, no.~4, pp. 2051--2068, 2020.

\bibitem{yilmaz2016coherence}
L.~Yilmaz, A.~Franco-Watkins, and T.~S. Kroecker, ``Coherence-driven reflective equilibrium model of ethical decision-making,'' in \emph{2016 IEEE International Multi-Disciplinary Conference on Cognitive Methods in Situation Awareness and Decision Support (CogSIMA)}.\hskip 1em plus 0.5em minus 0.4em\relax IEEE, 2016, pp. 42--48.

\bibitem{brandstedt2020rawlsian}
E.~Brandstedt and J.~Br{\"a}nnmark, ``Rawlsian constructivism: a practical guide to reflective equilibrium,'' \emph{The Journal of Ethics}, vol.~24, pp. 355--373, 2020.

\bibitem{doorn2012exploring}
N.~Doorn, ``Exploring responsibility rationales in research and development (r\&d),'' \emph{Science, Technology, \& Human Values}, vol.~37, no.~3, pp. 180--209, 2012.

\bibitem{cath2016reflective}
Y.~Cath, ``Reflective equilibrium,'' \emph{The Oxford handbook of philosophical methodology}, pp. 213--230, 2016.

\bibitem{thagard2012cognitive}
P.~Thagard, \emph{The cognitive science of science: Explanation, discovery, and conceptual change}.\hskip 1em plus 0.5em minus 0.4em\relax Mit Press, 2012.

\bibitem{lei2021should}
X.~Lei and P.-L.~P. Rau, ``Should i blame the human or the robot? attribution within a human--robot group,'' \emph{International Journal of Social Robotics}, vol.~13, no.~2, pp. 363--377, 2021.

\bibitem{kagan2018normative}
S.~Kagan, \emph{Normative ethics}.\hskip 1em plus 0.5em minus 0.4em\relax Routledge, 2018.

\bibitem{yazdanpanah2023reasoning}
V.~Yazdanpanah, E.~H. Gerding, S.~Stein, M.~Dastani, C.~M. Jonker, T.~J. Norman, and S.~D. Ramchurn, ``Reasoning about responsibility in autonomous systems: challenges and opportunities,'' \emph{Ai \& Society}, vol.~38, no.~4, pp. 1453--1464, 2023.

\bibitem{thagard1998coherence}
P.~Thagard and K.~Verbeurgt, ``Coherence as constraint satisfaction,'' \emph{Cognitive science}, vol.~22, no.~1, pp. 1--24, 1998.

\bibitem{van2012intractability}
I.~van Rooij and T.~Wareham, ``Intractability and approximation of optimization theories of cognition,'' \emph{Journal of Mathematical Psychology}, vol.~56, no.~4, pp. 232--247, 2012.

\bibitem{grossberg1976adaptive}
S.~Grossberg, ``Adaptive pattern classification and universal recoding: I. parallel development and coding of neural feature detectors,'' \emph{Biological cybernetics}, vol.~23, no.~3, pp. 121--134, 1976.

\bibitem{grossberg1987competitive}
------, ``Competitive learning: From interactive activation to adaptive resonance,'' \emph{Cognitive science}, vol.~11, no.~1, pp. 23--63, 1987.

\bibitem{rea2014designing}
L.~M. Rea and R.~A. Parker, \emph{Designing and conducting survey research: A comprehensive guide}.\hskip 1em plus 0.5em minus 0.4em\relax John Wiley \& Sons, 2014.

\bibitem{beauchamp2003methods}
T.~L. Beauchamp, ``Methods and principles in biomedical ethics,'' \emph{Journal of Medical ethics}, vol.~29, no.~5, pp. 269--274, 2003.

\bibitem{dunn2012toward}
M.~Dunn, M.~Sheehan, T.~Hope, and M.~Parker, ``Toward methodological innovation in empirical ethics research,'' \emph{Cambridge Quarterly of Healthcare Ethics}, vol.~21, no.~4, pp. 466--480, 2012.

\end{thebibliography}

\section*{Biography Section}

\begin{IEEEbiography}[{\includegraphics[width=1in,height=1.25in,clip,keepaspectratio]{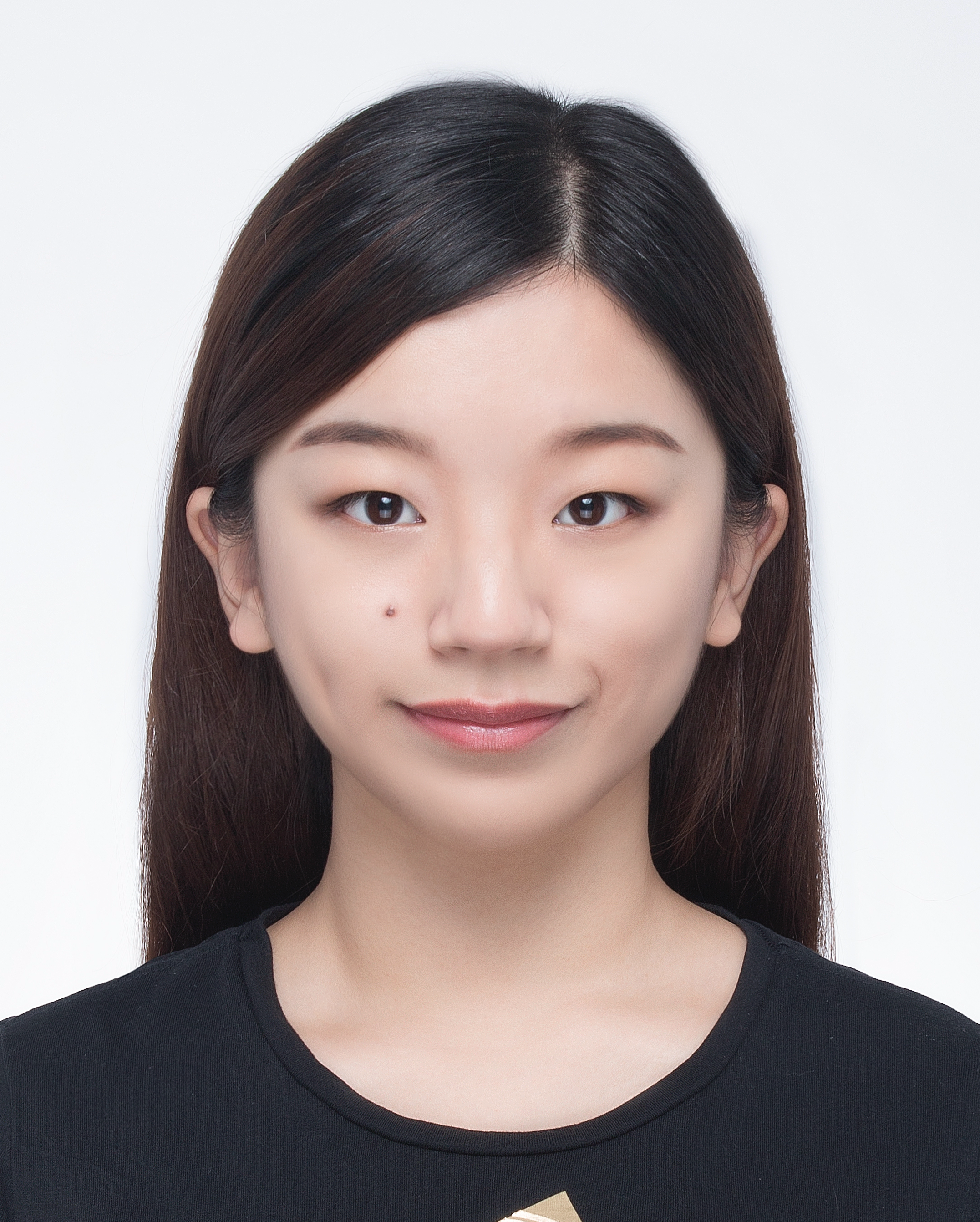}}]{Yunfei Ge} received her B.S. degree with Honors in Optoelectronics from Sun Yat-Sen University (SYSU), Guangzhou, China, in 2017, and her M.S. degree in Electrical Engineering from Columbia University, New York, NY, USA, in 2018. She entered the doctoral program in Electrical Engineering at New York University (NYU) Tandon School of Engineering in 2019. She was a recipient of the Honors Thesis Award for her undergraduate studies at SYSU.
During her time at NYU, she received the David C. and Cecilia M. Chang Education Award in 2021 for graduate teaching excellence in Electrical and Computer Engineering. Additionally, she was awarded the Li Publication Award in 2024 for her research published in the top venue, IEEE Transactions on Information Forensics and Security (TIFS).
\end{IEEEbiography}

\begin{IEEEbiography}[{\includegraphics[width=1in,height=1.25in,clip,keepaspectratio]{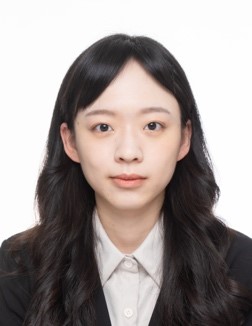}}]{Ya-Ting Yang} received her B.S. degree in electrical engineering from National Tsing Hua University, Hsinchu, Taiwan, in 2020, and the M.S. degree in communication engineering from National Taiwan University, Taipei, Taiwan, in 2022.  She is currently pursuing the Ph.D. degree in electrical and computer engineering with the Tandon School of Engineering, New York University (NYU), New York, NY, USA. Her current research interests include game theory and the impact of human and societal factors, such as cognitive biases, misinformation, and accountability issues, in cyber-physical systems.
\end{IEEEbiography}

\begin{IEEEbiography}[{\includegraphics[width=1in,height=1.25in,clip,keepaspectratio]{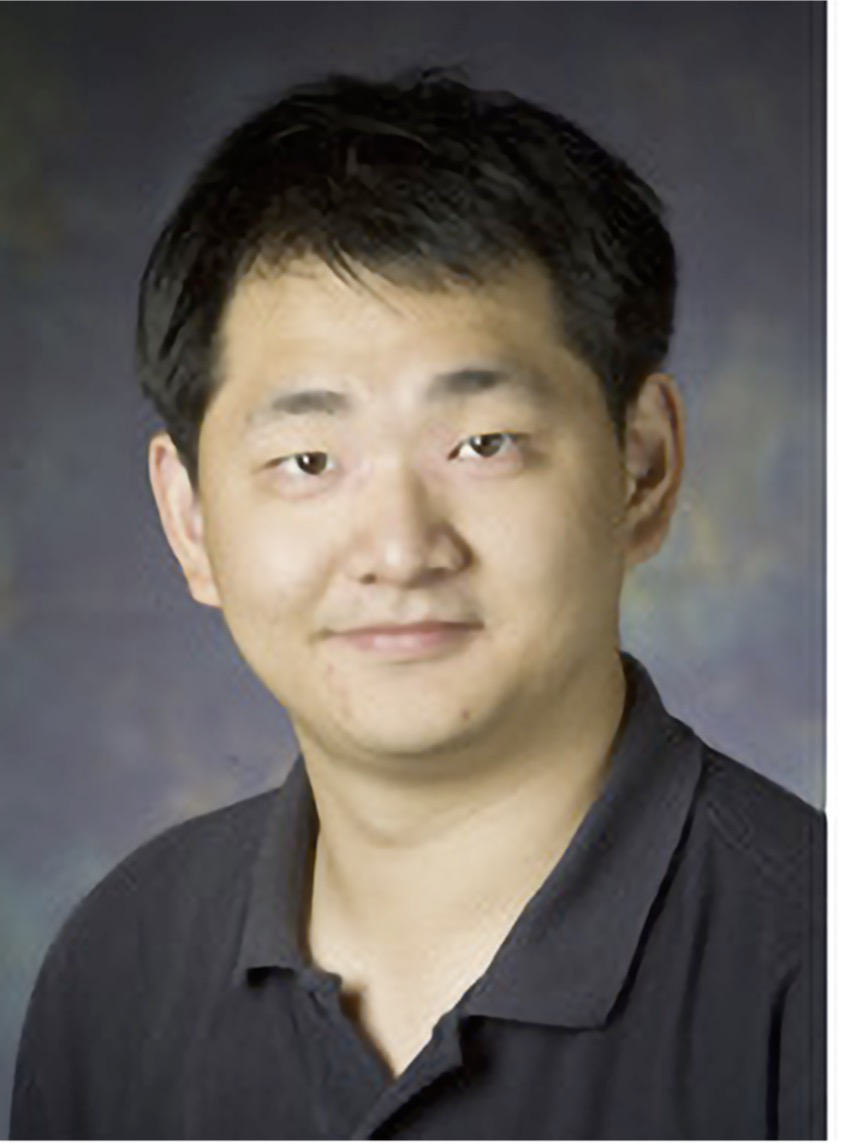}}]{Quanyan Zhu} Quanyan Zhu received B.Eng. in Honors Electrical Engineering from McGill University in 2006, M.A.
Sc. from the University of Toronto in 2008, and Ph.D. from the University of Illinois at Urbana-
Champaign (UIUC) in 2013. After stints at Princeton University, he is currently an associate professor at
the Department of Electrical and Computer Engineering, New York University (NYU). He is an affiliated
faculty member of the Center for Urban Science and Progress (CUSP) at NYU. He is a recipient of many
awards, including NSF CAREER Award, NYU Goddard Junior Faculty Fellowship, NSERC Postdoctoral
Fellowship (PDF), NSERC Canada Graduate Scholarship (CGS), and Mavis Future Faculty Fellowships.
He spearheaded and chaired INFOCOM Workshop on Communications and Control on Smart Energy
Systems (CCSES), Midwest Workshop on Control and Game Theory (WCGT), and ICRA workshop on
Security and Privacy of Robotics. His current research interests include game theory, machine learning,
cyber deception, network optimization and control, smart cities, Internet of Things, and cyber-physical
systems. He has served as the general chair or the TPC chair of the 7th and the 11th Conference on
Decision and Game Theory for Security (GameSec) in 2016 and 2020, the 9th International Conference
on NETwork Games, COntrol and OPtimisation (NETGCOOP) in 2018, the 5th International Conference
on Artificial Intelligence and Security (ICAIS 2019) in 2019, and 2020 IEEE Workshop on Information
Forensics and Security (WIFS). He has also spearheaded in 2020 the IEEE Control System Society (CSS)
Technical Committee on Security, Privacy, and Resilience. He is a co-author of two recent books
published by Springer: \textit{Cyber-Security in Critical Infrastructures: A Game-Theoretic Approach}
(with S. Rass, S. Schauer, and S. König) and \textit{A Game- and Decision-Theoretic Approach to
Resilient Interdependent Network Analysis and Design} (with J. Chen).
\end{IEEEbiography}

\vfill

\end{document}